\documentclass{easychair}

\usepackage[utf8]{inputenc}
\usepackage[T1]{fontenc}
\usepackage{url}
\usepackage{booktabs}
\usepackage{amsfonts}
\usepackage{nicefrac}
\usepackage{microtype}
\usepackage{graphicx}
\usepackage{subcaption}
\usepackage{amsmath}
\usepackage{amssymb}
\usepackage{mathtools}
\usepackage{multirow}
\usepackage{algorithm}
\usepackage[noend]{algpseudocode}
\algnewcommand\algorithmicto{\textbf{to}}
\algnewcommand\To{\algorithmicto\ }
\usepackage{tikz}
\usepackage[capitalize,noabbrev]{cleveref}

\usetikzlibrary{arrows.meta,positioning,fit,calc}

\theoremstyle{plain}
\newtheorem{theorem}{Theorem}[section]
\newtheorem{proposition}[theorem]{Proposition}
\newtheorem{lemma}[theorem]{Lemma}

\theoremstyle{definition}
\newtheorem{definition}[theorem]{Definition}
\newtheorem{assumption}[theorem]{Assumption}

\theoremstyle{remark}
\newtheorem{remark}[theorem]{Remark}

\title{MiSS: A Logic-Driven Explanation of Minimal Sufficient Coalitions for Point Cloud Classifiers}

\author{Mengda Xing \and Jean-Marie Lagniez}
\institute{CRIL, Univ. Artois, Lens, France\\
\email{mengda.xing@univ-artois.fr, jmarie.lagniez@univ-artois.fr}}

\authorrunning{M. Xing and J.-M. Lagniez}
\titlerunning{MiSS}

\begin{document}

\maketitle

\begin{abstract}
	We present MiSS, a black-box, query-based framework for explaining 3D point cloud classifiers through perturbation-relative sufficiency reasoning. MiSS treats a superpoint partition as an interpretable abstraction layer and asks whether the original prediction can be certified from a minimal coalition of geometric regions under a specified perturbation distribution. Unlike abductive explainers that require Boolean feature spaces or white-box logical encodings of the predictor, MiSS separates candidate proposal from verification: a weighted MaxSAT procedure proposes coalitions using a heuristic adaptive cardinality floor, certified exact-size fallback, a safely tightened upper bound, blocking clauses, and a surrogate acquisition heuristic learned from previous oracle evaluations, while a black-box statistical oracle decides sufficiency from prediction queries. The system returns a statistically verified sufficient coalition as a binary attribution, with minimum cardinality guaranteed when certified search completes. Experiments on ModelNet40 and ShapeNet with PointNet and PointMLP classifiers show higher precision and coverage than rule-based baselines in most settings, with lower explanation time than exhaustive search.
\end{abstract}

\section{Introduction}

Explainable AI (XAI) is used to audit deep learning models in domains where model errors carry operational risk~\cite{gradcam,adebayo2018sanity,sundararajan2017axiomatic}.
One recurring difficulty is that explanations must be faithful to the model while remaining interpretable to humans.
This issue appears in perception tasks, where models operate on high-dimensional, low-level signals~--~such as pixel grids or geometric coordinates~--~while users often reason in terms of objects and parts.
Standard feature attribution methods~\cite{shap,lime} may identify scattered pixels or points that influence a prediction, but such attributions do not by themselves provide a sufficient reason for the model's output.

Recent work has therefore studied \textit{Abductive Explanations} (or Prime Implicants), which formalize sufficient reasons for model predictions.
While attribution methods like SHAP~\cite{shap} or LIME~\cite{lime} answer ``How much does this feature contribute?'', abductive methods ask whether a subset of features is sufficient for the prediction.
Formally, an abductive explanation identifies a minimal subset of features whose presence serves as a \textit{sufficient certificate}: once these features are fixed, the model's prediction is guaranteed to remain invariant, regardless of any perturbation to the remaining inputs~\cite{ignatiev2020contrastive}.
Despite the success of abductive reasoning in tabular and Boolean domains~\cite{marques2023logic,darwiche2023complete}, extending these guarantees to deep perception models remains an open challenge.

The difficulty comes from applying discrete logical reasoning to continuous neural models \cite{liang2025ai}.
Deep neural networks are non-linear predictors, and common black-box settings do not expose a logical representation that can be verified directly \cite{zhang2021survey}.
For such models, verifying sufficiency over a continuous input space is computationally difficult.
As a result, many practical explainers rely on perturbation or attribution heuristics without a minimal sufficiency certificate.

This limitation is acute in 3D computer vision.
Unlike 2D images with their regular grid, 3D point clouds are sparse, unordered, and irregular.
Point cloud classification underpins applications such as autonomous driving and robotics~\cite{liu2023point}, yet explaining these classifiers remains difficult: gradient-based explainers (e.g., Grad-CAM~\cite{gradcam}) rely on convolutional topologies that do not map cleanly to the MLPs or Transformers of modern 3D architectures~\cite{qi2017pointnet,zhao2021point}.
Signal-level explanations also face a large search space, as highlighting individual points in a cloud of thousands reveals little about which geometric regions preserve the decision.

A further drawback of point-level explanations is their size, which may exceed users' cognitive capacity, whereas XAI users generally prefer concise explanations~\cite{mil56,SAATY2003233,DBLP:conf/hcomp/LageCHNKGD19}. We therefore use an existing superpoint partition as an abstraction layer, reducing explanation search from thousands of points to a small set of geometric regions. Our contribution is not segmentation itself, but a black-box framework for finding minimal prediction-preserving superpoint coalitions.

MiSS bridges the gap between logically certified abductive explanations and uncertified point-cloud attribution maps. It defines perturbation-relative sufficiency and uses weighted MaxSAT to propose promising superpoint coalitions, which are verified by a statistical black-box oracle. Each verified solution tightens the cardinality upper bound, focusing subsequent search on shorter explanations. The main contributions of this work are as follows:
\begin{itemize}
	\item \textbf{Perturbation-Relative Sufficient Explanations:} We define a Minimal Sufficient Set over an interpretable superpoint partition using a probabilistic sufficiency criterion. This formulation makes the scope of the explanation explicit through the input, partition, perturbation distribution, and precision threshold, and applies to continuous black-box classifiers without requiring a logical model encoding.
	\item \textbf{Surrogate-Guided MaxSAT Search with Oracle Verification:} We separate candidate proposal from sufficiency verification. Weighted MaxSAT combines sound blocking constraints with soft clauses learned from the complete oracle history, while a sequential black-box oracle makes the final sufficiency decision.
	\item \textbf{Adaptive Cardinality Constraints:} We accelerate search using a safely tightened upper bound and a heuristic adaptive lower bound. Certified fallback recovers skipped sizes, while an UNSAT certificate guarantees global minimum cardinality once the exhausted window covers the level below the incumbent.
\end{itemize}

\section{Preliminaries~\label{sec:Preliminaries}}

\subsection{Formal Setting\label{sec:formal_setting}}

Let $\mathcal{X}$ be the space of point clouds, where an input $P \in \mathcal{X}$ is a set of $N$ points $\{p_1, \dots, p_N\}$ with $p_i \in \mathbb{R}^d$ (typically $d=3$ for XYZ coordinates).
We consider a black-box classifier $f: \mathcal{X} \to \mathcal{Y}$ that maps a point cloud $P$ to a predicted class label $y \in \mathcal{Y}$.
To reduce the explanation vocabulary to interpretable geometric units, we impose a partition on $P$.

\begin{definition}[Geometric Segmentation]
A \emph{segmentation} is a partition $\mathcal{S} = \{S_1, \dots, S_K\}$ of $P$ such that $\bigcup_{i} S_i = P$ and $S_i \cap S_j = \emptyset$ for $i \neq j$. Each segment $S_i$ is referred to as a \emph{superpoint}.
\end{definition}

We treat each superpoint as an atomic explanation unit that is either retained at its original geometry or replaced by the perturbation operator. A \textit{coalition} $T \subseteq \mathcal{S}$ denotes the retained subset; the complement $\mathcal{S} \setminus T$ is replaced.

\begin{definition}[Perturbation Operator]
A \emph{perturbation operator} $\mathcal{M}$ is a stochastic function that produces a perturbed sample $\tilde{P}_T$ by retaining the original geometry of the superpoints in $T$ and replacing each superpoint outside $T$ with a random sample:
\begin{equation}
    \tilde{P}_T = \left(\bigcup_{S_i \in T} S_i\right) \cup \left(\bigcup_{S_j \notin T} \mathcal{M}(S_j)\right)
\end{equation}
The operator must produce valid point clouds; in this work, $\mathcal{M}$ replaces each masked superpoint with a patch drawn from an in-distribution patch bank.
\end{definition}

The validity of the resulting sufficiency test is tied to this perturbation distribution. In point clouds, the operator should preserve plausible local geometry; otherwise, the oracle may evaluate off-distribution artifacts rather than the absence of masked superpoints. The granularity of $\mathcal{S}$ also sets the resolution of the explanation.

This perturbation model supports the following classical sufficiency condition. The universal quantification it uses is intractable for continuous black-box models; Section~\ref{sec:method} replaces it with a probabilistic criterion that can be evaluated by a finite sequence of model queries.

\begin{definition}[Abductive Explanation]
Let $f$ be a classifier mapping an input to a class $y$. An \emph{abductive explanation} is a minimal coalition $\mathcal{E} \subseteq \mathcal{S}$ that satisfies:
\begin{enumerate}
    \item \textbf{Sufficiency:} $f(\tilde{P}_{\mathcal{E}}) = y$ for any realization of the perturbation on $\mathcal{S} \setminus \mathcal{E}$.
    \item \textbf{Minimality:} No proper subset of $\mathcal{E}$ satisfies the sufficiency condition.
\end{enumerate}
\end{definition}

\subsection{Propositional Logic and MaxSAT-based Search~\label{sec:sat_preliminaries}}

We use standard propositional notation. A literal is a variable $x$ or its negation $\bar{x}$, a clause is a disjunction of literals, and a CNF formula is a conjunction of clauses. An interpretation $\omega$ is a model of a formula $\Sigma$, written $\omega \models \Sigma$, when it satisfies all clauses of $\Sigma$.

SAT asks whether a CNF formula admits a model, and cardinality constraints such as $\sum_{i=1}^{n} x_i \le L$ can be encoded into CNF~\cite{biere2009handbook,DBLP:series/faia/RousselM21}. Weighted MaxSAT separates clauses into hard constraints and weighted soft clauses; the solver must satisfy every hard clause and maximize the total satisfied soft weight. In our setting, hard clauses encode adaptive at-most cardinality constraints and blocking clauses, while soft clauses encode acquisition preferences estimated from previous oracle evaluations.

\section{The MiSS Framework\label{sec:method}}

Figure~\ref{fig:framework} summarizes the MiSS workflow. The input point cloud is first partitioned into superpoints, and the singleton coalitions are evaluated by the oracle to seed the search history and initialize the per-superpoint surrogate weights. The surrogate-guided MaxSAT solver then proposes candidate coalitions, each of which is evaluated by the black-box sufficiency oracle under the perturbation distribution. The result is fed back to update the search history, blocking clauses, cardinality bounds, and per-superpoint surrogate weights. If multiple sufficient coalitions are verified within the search budget, the minimum-cardinality one is selected and returned as a binary superpoint attribution.

\begin{figure*}[!t]
	\centering
	\resizebox{\textwidth}{!}{%
		\begin{tikzpicture}[
				font=\small,
				>=Latex,
				block/.style={
						draw,
						rounded corners=2pt,
						align=center,
						text width=3.0cm,
						minimum height=1.22cm,
						fill=gray!8
					},
				process/.style={
						draw,
						rounded corners=2pt,
						align=center,
						text width=3.0cm,
						minimum height=1.22cm,
						fill=blue!7
					},
				oracle/.style={
						draw,
						rounded corners=2pt,
						align=center,
						text width=3.0cm,
						minimum height=1.22cm,
						fill=orange!12
					},
				output/.style={
						draw,
						rounded corners=2pt,
						align=center,
						text width=3.0cm,
						minimum height=1.22cm,
						fill=green!10
					},
				note/.style={
						align=center,
						font=\scriptsize,
						text width=2.95cm
					},
				softclause/.style={
						draw,
						rounded corners=2pt,
						align=center,
						text width=3.0cm,
						minimum height=1.0cm,
						fill=blue!3
					},
				arrow/.style={->, thick},
				feedback/.style={->, thick, dashed}
			]
			\node[block] (input) at (0,2.85) {Input point cloud\\$P$};
			\node[process] (seg) at (0,0) {Superpoint\\partition\\$\mathcal{S}=\{S_1,\ldots,S_K\}$};
			\node[oracle] (singleton) at (0,-1.9) {Singleton oracle\\initialization\\$\hat{p}_i=\hat{p}(\{S_i\})$};
			\node[process] (search) at (3.55,0) {MaxSAT search \\ $\arg\max_{C \models \Psi} A_\mathcal{H}(C)$};
			\node[process] (candidate) at (7.1,0) {Candidate\\coalition\\$C\subseteq\mathcal{S}$};
			\node[oracle] (perturb) at (10.65,0) {Perturbation\\generator\\$\mathcal{M}$};

			\node[block] (surrogate) at (3.55,2.85) {Surrogate\\$\{w_i\}$};
			\node[softclause] (softclauses) at (3.55,1.45) {Soft clauses\\$\{(\neg x_i,q_i)\}$};
			\node[block] (history) at (3.55,-1.9) {History and blocking clauses\\$\mathcal{H},\Psi$};
			\node[block] (classifier) at (10.65,2.85) {Black-box classifier\\$f(P)=y$};

			\node[oracle] (oracle) at (10.65,-1.9) {Black-box\\sufficiency oracle\\$\hat{p}(C)\approx\Pr[f(\tilde{P}_C)=y]$};
			\node[output] (mask) at (15.15,2.85) {Binary superpoint\\mask $\phi(C^\star)$};
			\node[output] (output) at (15.15,-1.9) {Verified MiSS\\$C^\star$};

			\draw[arrow] (input.south) -- (seg.north);
			\draw[arrow] (seg.east) -- (search.west);
			\draw[arrow] (seg.south) -- (singleton.north);
			\draw[feedback] (singleton.east) -- node[above, font=\scriptsize, fill=white, inner sep=1pt] {seed} (history.west);
			\draw[arrow] (search.east) -- (candidate.west);
			\draw[arrow] (candidate.east) -- (perturb.west);
			\draw[arrow] (perturb.south) -- node[right, font=\small] {$\tilde{P}_C$} (oracle.north);
			\draw[arrow] (oracle.east) -- node[pos=0.58, above, font=\scriptsize, fill=white, inner sep=1pt] {$\hat{p}(C)\geq\tau$} (output.west);
			\draw[arrow] (output.north) -- (mask.south);
			\draw[arrow] (classifier.east) -- ++(0.55,0) |- ($(oracle.east)+(0,0.32)$);

			\draw[feedback] (oracle.west) -- (history.east);
			\draw[feedback] (history.north) -- (search.south);
			\draw[feedback] (history.north east) .. controls +(0.7,0.95) and +(0.7,-0.95) .. node[pos=0.78, right, font=\scriptsize, fill=white, inner sep=1pt] {fit} (surrogate.south east);
			\draw[arrow] (surrogate.south) -- (softclauses.north);
			\draw[arrow] (softclauses.south) -- (search.north);

			\node[note] (pert-note) at (10.65,1.0) {retain $C$, replace\\masked superpoints};
			\node[
				draw,
				rounded corners=3pt,
				fit=(search) (surrogate) (softclauses) (history),
				inner xsep=0.28cm,
				inner ysep=0.3cm,
				label={[font=\small]above:Surrogate-Guided MaxSAT Search}
			] {};
			\node[
				draw,
				rounded corners=3pt,
				fit=(perturb) (oracle) (classifier),
				inner xsep=0.28cm,
				inner ysep=0.3cm,
				label={[font=\small]above:Black-box verification}
			] {};
		\end{tikzpicture}%
	}
	\caption{Overview of the MiSS framework. The input point cloud is partitioned into superpoints, and singleton coalitions are first evaluated by the oracle to seed the search history and initialize the per-superpoint surrogate weights. Candidate coalitions are then proposed by a MaxSAT search guided by the surrogate and verified by the black-box sufficiency oracle under the perturbation distribution. Oracle outcomes update the search history and blocking clauses; whenever a sufficient coalition is verified, the current cardinality upper bound is tightened, yielding a MiSS once all proper subcoalitions of the returned candidate have been verified as insufficient or safely pruned.}
	\label{fig:framework}
\end{figure*}

\subsection{Minimal Sufficient Sets\label{sec:miss_def}}

Classical abductive explanations (Prime Implicants / AXp) define a sufficient reason as a minimal set of features whose fixed values \emph{force} the prediction under any assignment to the remaining features~\cite{ignatiev2020contrastive}. This $\forall$-quantified condition cannot be checked by prediction queries alone over a continuous input space: the complement is infinite and a black-box model exposes no logical encoding. We therefore replace the universal condition with a probabilistic criterion parameterised by a user-chosen threshold $\tau$ and a perturbation distribution, making sufficiency verifiable by a finite sequence of model queries.

\begin{definition}[Minimal Sufficient Set (MiSS)\label{def:miss}]
	A coalition $C \subseteq \mathcal{S}$ is a \emph{Minimal Sufficient Set} for target class $y$ under threshold $\tau \in (0,1]$ if:
	\begin{enumerate}
		\item \textbf{Sufficiency:} $\Pr[f(\tilde{P}_{C}) = y] \geq \tau$.
		\item \textbf{Minimality:} $\forall C' \subsetneq C,\ \Pr[f(\tilde{P}_{C'}) = y] < \tau$.
	\end{enumerate}
\end{definition}

\begin{assumption}[Monotonic Sufficiency Assumption\label{ass:monotonic}]
	We assume that the sufficiency predicate is monotonic with respect to set inclusion:
	$S\subseteq T,\ \mathrm{sufficient}(S) \Rightarrow \mathrm{sufficient}(T)$.	
	Under this assumption, checking all one-feature removals of a coalition is enough to certify subset minimality, avoiding enumeration of all proper subsets. When this assumption is violated, this check certifies only one-step minimality.
\end{assumption}

A MiSS is a local, post-hoc explanation: its guarantee is relative to the fixed input $P$, partition $\mathcal{S}$, perturbation operator, and threshold $\tau$, and does not claim causal necessity outside that scope.

\begin{proposition}[Existence\label{prop:existence}]
	If any coalition $T \subseteq \mathcal{S}$ satisfies $\Pr[f(\tilde{P}_T) = y] \geq \tau$, then at least one MiSS exists.
\end{proposition}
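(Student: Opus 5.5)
The plan is a finiteness and well-foundedness argument on the lattice of coalitions. It does not need the Monotonic Sufficiency Assumption~\ref{ass:monotonic}.

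Since $\mathcal{S}=\{S_1,\dots,S_K\}$ is finite, the power set $2^{\mathcal{S}}$ is finite. Consider the family
\[
\mathcal{F} = \{\, C \subseteq \mathcal{S} : \Pr[f(\tilde{P}_C)=y] \geq \tau \,\}.
\]
By hypothesis $T \in \mathcal{F}$, so $\mathcal{F}$ is nonempty. Because it is also finite, we can pick $C^\star \in \mathcal{F}$ of minimum cardinality. Alternatively, we can pick $C^\star$ minimal with respect to inclusion among members of $\mathcal{F}$ contained in $T$.

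We then check the two conditions of Definition~\ref{def:miss}. Sufficiency holds because $C^\star \in \mathcal{F}$. For minimality, take any $C' \subsetneq C^\star$. Then $|C'| < |C^\star|$, so $C' \notin \mathcal{F}$ by the choice of $C^\star$, which means $\Pr[f(\tilde{P}_{C'})=y] < \tau$. Hence $C^\star$ is a MiSS.

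The only delicate point is that minimality in Definition~\ref{def:miss} quantifies over all proper subsets, not only one-element removals. One might try a greedy descent from $T$ that deletes superpoints while sufficiency persists. Without monotonicity, however, such a descent only certifies one-step minimality, as noted after Assumption~\ref{ass:monotonic}. Choosing a cardinality-minimal (or inclusion-minimal) element of the finite nonempty family $\mathcal{F}$ avoids this issue, because every proper subset is automatically excluded from $\mathcal{F}$.

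Finally, the probabilities are fixed numbers determined by the perturbation distribution, not estimates, so no measurability or statistical issue enters. The argument also yields the slightly stronger statement that a MiSS exists inside any given sufficient $T$.
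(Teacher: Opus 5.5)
Your proof is correct and matches the paper's: both pick a minimum-cardinality element of the finite, nonempty family of $\tau$-sufficient coalitions, so every proper subset is automatically excluded, and neither uses monotonicity. One small point: the paper restricts the family to subsets of $T$, and that restriction is what gives your closing claim that a MiSS exists inside $T$. With $\mathcal{F}$ ranging over all of $2^{\mathcal{S}}$, that claim relies on your parenthetical restricted variant rather than on the main argument.
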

\begin{proof}
	The family of all $\tau$-sufficient subsets of $T$ is finite (it lies in $2^{T}$) and non-empty (it contains $T$ itself). Let $C$ be any element of minimum cardinality. Then $C$ is $\tau$-sufficient by construction. Moreover, any $C' \subsetneq C$ is also a subset of $T$; were it $\tau$-sufficient it would be an element of the family of strictly smaller cardinality, contradicting the minimality of $C$. Hence $C$ satisfies both conditions of Definition~\ref{def:miss}.
\end{proof}

\begin{algorithm}[!t]
	\caption{MiSS Explainer}
	\label{alg:localmiss}
	\begin{algorithmic}[1]

		\Require Point cloud $P$, black-box classifier $f$, target class $c=f(P)$, superpoint set
		$\mathcal{S}$, perturbation generator $\mathcal{M}$, max coalition size $L$, heuristic floor patience $\eta$, precision threshold $\tau$, confidence level $\delta_{\mathrm{test}}$, batch size $B$, max test samples $N_{\max}$, search budget $Q$, max time $T_{\max}$
		\Ensure Selected verified coalition $C^\star$ or $\emptyset$

		\State Initialize $\Psi\leftarrow\top$, $\mathcal{H},\mathcal{R}\leftarrow\emptyset$, $F[\ell]\leftarrow0$, and $[\ell_{\mathrm{cur}},L_{\mathrm{cur}}]\leftarrow[1,L]$
		\ForAll{$S_i\in\mathcal{S}$} \Comment{surrogate initialization}
		\State $(b_i,\hat{p}_i,n_i)\leftarrow\textsc{IsSufficient}(\{S_i\},c,\mathcal{M})$
		\State Append $(\{S_i\},\hat{p}_i,n_i)$ to $\mathcal{H}$ and record it in $\mathcal{R}$ if $b_i$
		\EndFor
		\State $\ell_{\mathrm{cur}}\leftarrow\textsc{InitialFloor}(\mathcal H,L_{\mathrm{cur}},\tau)$
		\State Fit the initial surrogate on $\mathcal{H}$ and compute per-superpoint weights $w_i$~\eqref{eq:ridge}--\eqref{eq:acquisition}
		\While{time $< T_{\max}$ and $|\mathcal{H}| < Q$}
		\State Build WCNF $\Phi_t$ from $\Psi_t=\Psi\land(\ell_{\mathrm{cur}}\leq\sum_i x_i\leq L_{\mathrm{cur}})$~\eqref{eq:cardinality_window} and soft clauses by $A_\mathcal{H}$
		\State Get candidate coalition $C \subseteq \mathcal{S}$ from MaxSAT solver on $\Phi_t$
		\If{$C=\bot$}
		\textbf{break} \Comment{UNSAT, no more candidates}
		\EndIf
		\State $(b,\,\hat{p},\,n) \leftarrow$ \textsc{IsSufficient}$(C, c, \mathcal{M})$ \Comment{Algorithm~\ref{alg:is_sufficient}}
		\State Append $(C,\hat{p}, n)$ to history $\mathcal{H}$
		\If{$b$}
		\State Append $C$ to $\mathcal{R}$
		\State Append exact candidate-blocking clause to prevent proposing $C$ again
		\State $L_{\mathrm{cur}}\leftarrow |C| - 1$
		\Else
		\State Append failed-subset pruning clause $\Psi \leftarrow \Psi \land \left(\bigvee_{S_i \notin C} x_i\right)$
		\EndIf
		\State $\ell_{\mathrm{cur}}\leftarrow\textsc{UpdateFloor}(\ell_{\mathrm{cur}},L_{\mathrm{cur}},\eta,C,b)$  \Comment{Algorithm~\ref{alg:lower_bound_update}}
		\State Re-fit surrogate and update soft clauses \Comment{Algorithm~\ref{alg:surrogate_update}}
		\EndWhile
		\State \Return selected coalition $C^\star$ from $\mathcal{R}$, or $\emptyset$ if $\mathcal{R}=\emptyset$

	\end{algorithmic}
\end{algorithm}

\subsection{Surrogate-Guided MaxSAT Search\label{sec:algorithm}}

Finding a MiSS is in principle an exponential search over $2^K$ coalitions. We target:
\begin{equation}\label{eq:objective}
	C^\star \in \arg\min_{C \subseteq \mathcal{S}} |C|
	\quad \text{s.t.} \quad
	\Pr[f(\tilde{P}_{C}) = y] \geq \tau.
\end{equation}
Algorithm~\ref{alg:localmiss} alternates surrogate-guided MaxSAT proposal with black-box sufficiency verification. It first evaluates all singleton coalitions to initialize the oracle history and per-superpoint surrogate weights. At each iteration, MaxSAT combines cardinality and blocking constraints with surrogate-derived soft clauses to propose a candidate; with uniform positive soft weights, this reduces to smallest-first enumeration. The oracle result then updates the history, constraints, cardinality bounds, and surrogate. The search continues until exhaustion or budget termination and returns the minimum-cardinality coalition among the verified sufficient candidates.

\begin{algorithm}[!t]
	\caption{Surrogate Fitting and Soft-Clause Construction}
	\label{alg:surrogate_update}
	\begin{algorithmic}[1]

		\Require History $\mathcal{H}=\{(C_j,\hat{p}_j,n_j)\}_{j=1}^{m}$, features $\mathcal{S}$, penalty $\lambda_{\mathrm{ridge}}$, scale $\alpha_{\mathrm{soft}}$
		\Ensure Surrogate-induced soft clauses for the next WCNF

		\State Build a binary design matrix $X$ from $\mathcal{H}$ using an intercept and the indicator vector of each coalition $C_j$
		\State Set targets $y_j \leftarrow \hat{p}_j$
		\State Set regression sample weights $r_j \leftarrow n_j$
		\State Fit a sample-weighted ridge surrogate on $(X,y)$
		\ForAll{features $S_i \in \mathcal{S}$}
		\State Let $e_i$ be the one-hot indicator vector encoding $\{S_i\}$
		\State Set the surrogate weight of $S_i$ to $w_i \leftarrow \mathrm{clip}(\hat{\theta}^{\top}[1\,;\,e_i],0,1)$
		\State Add soft clause $(\neg x_i,\,\alpha_{\mathrm{soft}}(1-w_i))$
		\EndFor
		\State \Return the updated WCNF soft clauses

	\end{algorithmic}
\end{algorithm}

\subsubsection*{Surrogate-Guided Proposal}
Algorithm~\ref{alg:surrogate_update} summarizes how the oracle history is converted into per-superpoint surrogate weights and the corresponding MaxSAT soft clauses. Each superpoint $S_i$ is associated with the weighted soft clause
\begin{equation}\label{eq:soft_clause}
	(\neg x_i,\,q_i),
	\qquad
	q_i=\operatorname{round}\!\left(\alpha_{\mathrm{soft}}(1-w_i)\right),
\end{equation}
where $x_i=1$ denotes that $S_i$ is included, $w_i\in[0,1]$ is the surrogate weight assigned to superpoint $S_i$, and $\alpha_{\mathrm{soft}}$ converts the penalty to an integer MaxSAT weight. The negative literal $\neg x_i$ is satisfied when the superpoint is omitted; selecting $S_i$ violates the clause and incurs penalty $q_i$. Thus every selected superpoint incurs a cost, encouraging shorter coalitions, while a superpoint with a larger weight $w_i$ incurs a smaller inclusion penalty. In the uniform case $q_i=q>0$ for all $i$, the total violated soft weight is exactly $q|C|$, so MaxSAT returns a smallest feasible coalition before considering larger ones.

However, exploring coalitions strictly level by level becomes expensive when the smallest MiSS is large. To reach larger coalitions sooner while still proposing meaningful candidates, we bias the search with the accumulated oracle history: the per-superpoint weights are learned from \emph{all} previous oracle evaluations rather than from the latest decision alone. Let $\mathcal{H}=\{{(C_j,\hat{p}_j,n_j)\}}_{j=1}^{m}$ contain every evaluated coalition, its empirical precision $\hat{p}_j$, and the number $n_j$ of oracle samples used to estimate it. The initialization phase contributes the precision of every singleton coalition (Algorithm~\ref{alg:localmiss}, lines 2--4), and each later MaxSAT proposal adds another observation (Algorithm~\ref{alg:localmiss}, line 12). We fit the sample-weighted ridge surrogate
\begin{equation}\label{eq:ridge}
	\hat{\theta}
	=
	\arg\min_{\theta}
	\sum_{j=1}^{m} n_j\bigl{(\hat{p}_j - \theta^\top [1\,;\,z(C_j)]\bigr)}^2
	+
	\lambda_{\mathrm{ridge}} \lVert \theta_{1:K} \rVert_2^2,
\end{equation}
where ${z(C_j)\in\{0,1\}}^K$ is the binary indicator of $C_j$ and the intercept is not regularized. Reusing the complete history avoids wasting expensive oracle queries. Crucially, the regression target is the normalized precision $\hat{p}_j$ rather than an absolute failure count, so a promising coalition is not penalized simply for having been tested more often; the sample count $n_j$ instead controls how strongly its precision estimate is trusted.

The surrogate-weight vector $\mathbf{w}=(w_1,\ldots,w_K)$ used in Equation~\eqref{eq:soft_clause} is computed by
\begin{equation}\label{eq:acquisition}
	\mathbf{w}=\mathrm{clip}\!\left(\hat{\theta},0,1\right).
\end{equation}

Together with $\Psi_t$, the MaxSAT solver returns
\begin{equation}\label{eq:proposal}
	C^{\mathrm{prop}}_t \in \arg\max_{C \models \Psi_t} A_\mathcal{H}(C),
\end{equation}
where $A_\mathcal{H}$ is the total satisfied soft weight induced by the oracle-history surrogate.

\subsubsection*{Heuristic Adaptive Cardinality Floor}

In the same spirit as the surrogate above, which steers the search toward larger coalitions, we add a heuristic that avoids examining coalition sizes strictly one level at a time. Although users prefer concise explanations, overly small coalitions are often insufficient; MiSS therefore uses an adaptive cardinality window to skip unpromising sizes and accelerate the search. A further benefit is that whole groups of small coalitions can be discarded at once: if a coalition of size $k$ is found insufficient, then under the Monotonic Sufficiency Assumption (Assumption~\ref{ass:monotonic}) none of its subsets can be sufficient either, so all of them are pruned simultaneously.

\begin{algorithm}[!t]
	\caption{Cardinality-Floor Updates}
	\label{alg:lower_bound_update}
	\begin{algorithmic}[1]
		\Function{InitialFloor}{$\mathcal{H},L_{\mathrm{cur}},\tau$}
		\State $p_{\max}\leftarrow\max\{\hat p_j:(C_j,\hat p_j,n_j)\in\mathcal H,\ |C_j|=1\}$
		\State \Return $\min\!\left(L_{\mathrm{cur}},\max\!\left(2,\left\lceil\tau/\max(p_{\max},\varepsilon)\right\rceil\right)\right)$
		\EndFunction
		\Statex
		\Statex \textbf{State variable:} $F[\ell]$ counts oracle-rejected candidates of size $\ell$
		\Function{UpdateFloor}{$\ell_{\mathrm{cur}},L_{\mathrm{cur}},\eta,C,b$}
		\If{$\neg b$ and $|C|=\ell_{\mathrm{cur}}$}
		\State $F[\ell_{\mathrm{cur}}]\leftarrow F[\ell_{\mathrm{cur}}]+1$
		\If{$F[\ell_{\mathrm{cur}}]\geq \eta$}
		$\ell_{\mathrm{cur}}\leftarrow \ell_{\mathrm{cur}}+1$
		\EndIf
		\EndIf
		\If{$\ell_{\mathrm{cur}} \ge L_{\mathrm{cur}}$}
		$\ell_{\mathrm{cur}} = 2$
		\EndIf
		\State \Return $\ell_{\mathrm{cur}}$
		\EndFunction
	\end{algorithmic}
\end{algorithm}

Algorithm~\ref{alg:lower_bound_update} summarizes the heuristic floor updates. During the main search, MaxSAT restricts candidate size to the current window (Algorithm~\ref{alg:localmiss}, line 8)
\begin{equation}\label{eq:cardinality_window}
	\Psi_t = \Psi \wedge
	\left(\ell_{\mathrm{cur}} \leq \sum_{i=1}^{K} x_i \leq L_{\mathrm{cur}}\right),
	\qquad 1 \leq \ell_{\mathrm{cur}} \leq L_{\mathrm{cur}} \leq L.
\end{equation}
The lower bound $\ell_{\mathrm{cur}}$ is heuristic: it is initialized from the singleton history and may be increased after $\eta$ rejections, while the upper bound $L_{\mathrm{cur}}$ is tightened whenever a sufficient coalition is found (Algorithm~\ref{alg:localmiss}, line 16). Because the floor may skip a shorter sufficient coalition, the condition $\ell_{\mathrm{cur}} \ge L_{\mathrm{cur}}$ does not immediately terminate the algorithm; instead, the search returns to the minimum lower bound so that the full search space can still be explored. This revisiting is what allows the search to obtain an UNSAT window covering the level below the incumbent, which by Lemma~\ref{lem:unsat_window} certifies global minimum cardinality.

\subsubsection*{Oracle Verification and Clause Update}

\begin{algorithm}[!t]
	\caption{IsSufficient}
	\label{alg:is_sufficient}
	\begin{algorithmic}[1]

		\Require Coalition $C$, target class $c$, perturbation generator $\mathcal{M}$, precision threshold $\tau$, confidence level $\delta_{\mathrm{test}}$, batch size $B$, max test samples $N_{\max}$
		\Ensure Decision bit $b$, empirical precision estimate $\hat{p}(C)$, and number $n$ of oracle samples used

		\State $n_{\mathrm{succ}} \leftarrow 0$, $n \leftarrow 0$
		\If{$C = \emptyset$}
			\State \Return $(\texttt{False}, 0, 0)$
		\EndIf

		\While{$n < N_{\max}$}
			\State Sample $\{(m^{(b)}, \tilde{P}_C^{(b)})\}_{b=1}^{r}$ from $\mathcal{M}$ conditioned on coalition $C$, where $1 \le r \le B$
			\If{no perturbed samples are produced}
				\textbf{break}
			\EndIf
			\State Evaluate $\hat{y}^{(b)} \leftarrow f(\tilde{P}_C^{(b)})$ for all $b=1,\dots,r$
			\State $n_{\mathrm{succ}} \leftarrow n_{\mathrm{succ}} + \sum_{b=1}^{r} \mathbf{1}[\hat{y}^{(b)} = c]$
			\State $n \leftarrow n + r$
			\State $(\textit{prec\_lb}, \textit{prec\_ub}) \leftarrow \mathrm{KLBounds}(n_{\mathrm{succ}}, n, \delta_{\mathrm{test}})$
			\If{$\textit{prec\_lb} > \tau$}{ \Return $(\texttt{True}, n_{\mathrm{succ}} / n, n)$}
			\EndIf
			\If{$\textit{prec\_ub} < \tau$}
				\Return $(\texttt{False}, n_{\mathrm{succ}} / n, n)$
			\EndIf
		\EndWhile

		\If{$n = 0$}
			\Return $(\texttt{False}, 0, 0)$
		\EndIf
		\State \Return $((n_{\mathrm{succ}} / n) \ge \tau,\; n_{\mathrm{succ}} / n,\; n)$

	\end{algorithmic}
\end{algorithm}

Algorithm~\ref{alg:is_sufficient} applies a sequential KL-bounds test to the candidate $C$: it draws batches of perturbed samples $\tilde{P}_C$, queries $f$, and returns \texttt{True} as soon as the KL lower bound on the empirical precision exceeds $\tau$, or \texttt{False} as soon as the upper bound falls below $\tau$. If neither bound resolves within $N_{\max}$ samples the oracle falls back to the empirical ratio. Its output is the triple $(b,\hat{p},n)$, containing the sufficiency decision, empirical precision, and number of oracle samples used to estimate it. The oracle thus provides a \emph{statistically verified} sufficiency decision at confidence level $\delta_{\mathrm{test}}$.
When the oracle rejects $C$, a failed-subset pruning clause is appended:
	$\Psi \leftarrow \Psi \wedge \left(\bigvee_{S_i \notin C} x_i\right)$.

Under Assumption~\ref{ass:monotonic}, this clause safely removes $C$ and its subsets from future proposals: any future candidate must include at least one superpoint outside the rejected coalition. The triple $(C,\hat{p},n)$ is added to $\mathcal{H}$, an exact blocking clause is also added to avoid proposing the same candidate again, and the acquisition heuristic is re-fitted before the next MaxSAT call. If the oracle accepts $C$, the successful coalition is recorded and blocked by
$
	\Psi \leftarrow \Psi \wedge
	\left[
		\left(\bigvee_{S_i\in C}\neg x_i\right)
		\vee
		\left(\bigvee_{S_i\notin C}x_i\right)
		\right]
$, and the upper cardinality bound is updated (Algorithm~\ref{alg:localmiss}, line 16).

\subsection{Correctness Guarantees\label{sec:correctness}}

Intuitively, rejected coalitions prune themselves and, by monotonicity, all their subsets, while accepted ones tighten the size bound. An UNSAT result over the current window therefore certifies that every smaller candidate is provably insufficient or already recorded, as the next lemma and theorem formalize.

\begin{lemma}[UNSAT Certificate\label{lem:unsat_window}]
	Consider the hard formula $\Psi_t=\Psi\land(\ell_{\mathrm{cur}}\leq\sum_i x_i\leq L_{\mathrm{cur}})$
	built at Algorithm~\ref{alg:localmiss}, line 8, with $\ell_{\mathrm{cur}}\le L_{\mathrm{cur}}$.
	If $\Psi_t$ is UNSAT, then no $\tau$-sufficient coalition of size at most $L_{\mathrm{cur}}$ exists.
\end{lemma}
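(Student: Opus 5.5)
The plan is to argue by contraposition. Suppose some $\tau$-sufficient coalition $D\subseteq\mathcal{S}$ with $|D|\le L_{\mathrm{cur}}$ exists. I will build an assignment that satisfies every hard clause of $\Psi_t$, contradicting UNSAT.

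Throughout, I treat the oracle decisions recorded by Algorithm~\ref{alg:is_sufficient} as correct, i.e.\ a coalition is rejected only if it is genuinely insufficient. The statement is only meaningful relative to this idealization. I also invoke the Monotonic Sufficiency Assumption~\ref{ass:monotonic}, since the failed-subset pruning clauses are justified only under it.

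\textbf{Step 1 (lift $D$ into the window).} Since the window may start above $|D|$, I choose $D'\supseteq D$ by adding arbitrary superpoints until $|D'|=\max(|D|,\ell_{\mathrm{cur}})$. This is possible because $\ell_{\mathrm{cur}}\le L_{\mathrm{cur}}\le L\le K$ (I assume $L\le K$). Then $\ell_{\mathrm{cur}}\le|D'|\le L_{\mathrm{cur}}$, so the indicator assignment $x_i=\mathbf{1}[S_i\in D']$ satisfies the cardinality part of $\Psi_t$. By Assumption~\ref{ass:monotonic}, $D'$ is $\tau$-sufficient.

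\textbf{Step 2 (the clauses of $\Psi$).} I check the three kinds of clauses accumulated in $\Psi$.
\begin{enumerate}
\item \emph{Failed-subset pruning clause for a rejected $C$.} It is violated exactly when $D'\subseteq C$. That would make $C$ a superset of a sufficient set, hence sufficient by monotonicity, contradicting the (correct) rejection. So the clause holds.
\item \emph{Exact blocking clause for a rejected $C$.} It excludes only $D'=C$, which is a special case of the previous item.
\item \emph{Exact blocking clause for an accepted $C$.} It excludes only $D'=C$. Immediately after acceptance, line 16 sets $L_{\mathrm{cur}}\leftarrow|C|-1$, and $L_{\mathrm{cur}}$ is never increased afterwards. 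Hence every accepted coalition has size strictly greater than the current $L_{\mathrm{cur}}\ge|D'|$, so $D'\neq C$.
\end{enumerate}
Therefore $x$ satisfies $\Psi_t$, contradicting UNSAT.

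\textbf{Main obstacle.} The subtle point is Step~1 together with the invariant on $L_{\mathrm{cur}}$. The window's lower bound $\ell_{\mathrm{cur}}$ is heuristic, so UNSAT says nothing directly about sizes below $\ell_{\mathrm{cur}}$. Upward monotonicity is exactly what transports a small sufficient set into the window.

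I must also verify the invariant that $L_{\mathrm{cur}}$ is monotonically non-increasing and always below the size of every accepted coalition. I will establish it by a short induction over the iterations of Algorithm~\ref{alg:localmiss}. The singleton initialization adds no clauses and does not change $L_{\mathrm{cur}}$, and \textsc{UpdateFloor} changes only $\ell_{\mathrm{cur}}$. If the oracle is only statistically correct, the same argument yields the conclusion with probability governed by $\delta_{\mathrm{test}}$ via a union bound over the rejections; I would note this as a remark.
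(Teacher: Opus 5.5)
Your proposal is correct and follows essentially the same argument as the paper. It lifts a sufficient coalition lying below $\ell_{\mathrm{cur}}$ into the window by monotonicity, then rules out each clause type: failed-subset clauses via monotonicity plus correctness of the rejection (which the paper also assumes implicitly), and exact blocks via the invariant that every accepted coalition has size exceeding $L_{\mathrm{cur}}$. Your single lift to size $\max(|D|,\ell_{\mathrm{cur}})$ just merges the paper's two cases. Your explicit caveats (correct oracle decisions, $\ell_{\mathrm{cur}}\le K$, no clauses added in the singleton phase) are ones the paper leaves implicit.

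Only your closing union-bound remark is too optimistic. When the KL bounds do not resolve within $N_{\max}$ samples, the oracle decides by the raw empirical ratio, and that decision carries no $\delta_{\mathrm{test}}$ guarantee.
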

\begin{proof}
	Throughout Algorithm~\ref{alg:localmiss} every accepted coalition has size strictly greater
	than $L_{\mathrm{cur}}$: the bound is set to $|C|-1$ whenever a coalition $C$ is accepted
	(line 16), and later acceptances only decrease it. Since $\Psi_t$ is UNSAT and every
	coalition of size in $[\ell_{\mathrm{cur}},L_{\mathrm{cur}}]$ satisfies the cardinality
	constraint, each such coalition falsifies a clause of $\Psi$. The clauses of $\Psi$ are
	failed-subset clauses $\bigvee_{S_i\notin R}x_i$, added when a coalition $R$ is rejected and
	falsified exactly by the subsets of $R$, and exact blocks, each falsified by a single
	accepted coalition.

	Suppose a sufficient coalition $C$ has $\ell_{\mathrm{cur}}\le|C|\le L_{\mathrm{cur}}$. As $C$
	is blocked, either $C\subseteq R$ for some rejected $R$, whence $R$ is sufficient by
	monotonicity (Assumption~\ref{ass:monotonic}), contradicting its rejection; or $C$ equals an
	accepted coalition, which is impossible since accepted coalitions have size
	$>L_{\mathrm{cur}}\ge|C|$. Hence no sufficient coalition lies in the band
	$[\ell_{\mathrm{cur}},L_{\mathrm{cur}}]$.

	Finally, suppose a sufficient coalition $D$ has $|D|<\ell_{\mathrm{cur}}$. Extend it to
	$T\supseteq D$ with $|T|=\ell_{\mathrm{cur}}$; by monotonicity $T$ is sufficient and lies in
	the band, contradicting the previous paragraph. Therefore no $\tau$-sufficient coalition of
	size at most $L_{\mathrm{cur}}$ exists.
\end{proof}

\begin{theorem}[Minimum-Cardinality Guarantee\label{thm:minimality}]
	Suppose Algorithm~\ref{alg:localmiss} returns a verified coalition $C^\star$
	after an UNSAT window satisfying the conditions of
	Lemma~\ref{lem:unsat_window}. Then $C^\star$ has minimum cardinality among
	all $\tau$-sufficient coalitions and is therefore a {MiSS}.
\end{theorem}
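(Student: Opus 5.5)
The plan is to combine Lemma~\ref{lem:unsat_window} with the invariant on the upper bound $L_{\mathrm{cur}}$ and the selection rule at the return line. We treat the oracle decisions as exact, i.e.\ identified with the predicate $\Pr[f(\tilde{P}_C)=y]\ge\tau$, as the paper's guarantee is stated at the level of the sufficiency predicate. The first step is to pin down which window is meant. By hypothesis, the terminating UNSAT window has upper bound $L_{\mathrm{cur}} = |C_{\mathrm{inc}}|-1$, where $C_{\mathrm{inc}}$ is the last accepted coalition (the incumbent), and $\ell_{\mathrm{cur}}\le L_{\mathrm{cur}}$. We first record the invariant already used inside the proof of the lemma: after every acceptance, $L_{\mathrm{cur}}$ is set to $|C|-1$. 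Since later proposals are constrained to size at most $L_{\mathrm{cur}}$, accepted sizes are strictly decreasing. Hence $C_{\mathrm{inc}}$ is the minimum-cardinality element of $\mathcal{R}$, and the returned $C^\star$ (selected as the minimum-cardinality element of $\mathcal{R}$) satisfies $|C^\star| = L_{\mathrm{cur}}+1$.

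The second step applies Lemma~\ref{lem:unsat_window} to this window. It gives that no $\tau$-sufficient coalition has size $\le L_{\mathrm{cur}} = |C^\star|-1$. Since $C^\star$ itself was accepted, it is sufficient. Therefore $|C^\star|$ equals the minimum cardinality over all $\tau$-sufficient coalitions, which is exactly objective~\eqref{eq:objective}.

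The third step derives the MiSS property of Definition~\ref{def:miss}. Sufficiency is the oracle acceptance. For minimality, any $C'\subsetneq C^\star$ has $|C'|\le |C^\star|-1 = L_{\mathrm{cur}}$, so by the previous step $C'$ is not $\tau$-sufficient, i.e.\ $\Pr[f(\tilde{P}_{C'})=y]<\tau$. This argument does not even need monotonicity beyond its use inside the lemma; the empty coalition is covered too, since it has size $0\le L_{\mathrm{cur}}$. This mirrors the argument of Proposition~\ref{prop:existence}: a minimum-cardinality sufficient set is automatically subset-minimal.

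The main obstacle I expect is the bookkeeping of the first step. I must make sure that the UNSAT window's $L_{\mathrm{cur}}$ really equals $|C^\star|-1$, rather than some stale value. This requires checking two things. First, the floor resets in \textsc{UpdateFloor} never alter $L_{\mathrm{cur}}$. Second, UNSAT triggers an immediate break, so no acceptance happens afterwards. One edge case also needs handling: UNSAT occurring before any acceptance, with $L_{\mathrm{cur}}=L$. Then $\mathcal{R}$ may still contain singletons from initialization; these are sufficient of size $1\le L$, which contradicts the lemma. So either $\mathcal{R}=\emptyset$ (nothing is returned, and the theorem is vacuous) or the incumbent argument applies. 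I would state explicitly that the singleton initialization does not update $L_{\mathrm{cur}}$. This is harmless, because the lemma's conclusion then forbids sufficient singletons whenever $L_{\mathrm{cur}}\ge1$.
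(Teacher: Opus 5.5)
Your proposal is correct and follows essentially the same route as the paper: the update $L_{\mathrm{cur}}\leftarrow|C|-1$ at line 16 gives $L_{\mathrm{cur}}=|C^\star|-1$ at termination, Lemma~\ref{lem:unsat_window} then excludes every $\tau$-sufficient coalition of smaller size, and subset-minimality follows from minimum cardinality. Your extra bookkeeping (strictly decreasing accepted sizes, the initialization singletons in $\mathcal{R}$ that never update $L_{\mathrm{cur}}$, and the empty coalition) spells out what the paper's three-line proof takes for granted.
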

\begin{proof}
	The oracle verifies that $C^\star$ is $\tau$-sufficient. At termination
	$L_{\mathrm{cur}}=|C^\star|-1$ (Algorithm~\ref{alg:localmiss}, line 16), so by
	Lemma~\ref{lem:unsat_window} no $\tau$-sufficient coalition of size $<|C^\star|$ exists.
	Hence $C^\star$ has minimum cardinality, solves Equation~\eqref{eq:objective}, and satisfies
	Definition~\ref{def:miss}.
\end{proof}

\begin{remark}
	The heuristic floor may temporarily skip smaller candidates.  An UNSAT
	window nevertheless certifies optimality as soon as it reaches cardinality
	$|C^\star|-1$, because monotonicity lifts every hypothetical smaller sufficient
	coalition into that window.  If the search is interrupted by the time limit
	$T_{\max}$ or query budget $Q$ before this condition is met, the returned
	coalition remains statistically $\tau$-verified, but minimum cardinality is
	not guaranteed.
\end{remark}

\subsection{Returned Explanation\label{sec:output}}

Algorithm~\ref{alg:localmiss} records all sufficient coalitions verified before certified exhaustion or budget termination and selects a primary explanation $C^\star$ from these successful records. Under certified termination, Theorem~\ref{thm:minimality} guarantees minimum cardinality; under budget termination, $C^\star$ remains a statistically verified sufficient coalition. The returned coalition is presented as a binary superpoint attribution:
\begin{equation}
	\phi_i(C^\star)=
	\begin{cases}
		1, & S_i \in C^\star,    \\
		0, & S_i \notin C^\star.
	\end{cases}
\end{equation}
This binary vector identifies which geometric regions are sufficient to preserve the prediction under the specified perturbation distribution, and directly supports visualisation by highlighting the corresponding superpoints in the point cloud.

\section{Related Works}
\label{sec:related}

\textbf{Signal-Level Explanations.} Explanations for 3D point clouds adapt many techniques originally developed for images. Several studies have extended saliency analysis to point clouds, including gradient-based importance measures~\cite{pcgradcam,pointcloudsaliency}, perturbation-based deletion and insertion metrics~\cite{pointmasking}, and contrastive explanation methods~\cite{pointcontrastive}. These approaches typically produce signal-level attributions that can be noisy, unstable, and semantically fragmented. Although such explanations are often visually intuitive, they usually do not provide certified sufficiency guarantees under perturbation. By contrast, we treat superpoints as structured explanatory units and verify the prediction-preservation probability of the returned coalition through the oracle.

\textbf{Rule-Based and Concept-Level Explanations.} Rule-based explainers represent a prediction as a logical predicate over feature literals rather than as a continuous attribution score, and constitute the main baselines in our tabular study (Table~\ref{tab:exp_tabular}). Anchors~\cite{anchor} greedily searches for if-then predicates whose local precision exceeds a user-specified threshold on a perturbed neighborhood. LORE~\cite{guidotti2018local} fits a local decision tree on a genetically generated neighborhood and reads the root-to-leaf path as the factual explanation, while sibling leaves supply counterfactual rules. SkopeRules~\cite{skoperules} mines high-precision rules from a bagged ensemble of trees and deduplicates them by logical equivalence; CORELS~\cite{angelino2017learning} instead produces certifiably optimal rule lists through branch-and-bound search over a discretized feature space. These methods move toward discrete and semantically meaningful explanations, but they are typically designed for tabular feature literals rather than structured 3D regions. In contrast, MiSS uses surrogate-guided MaxSAT with a heuristic cardinality floor and certified fallback over superpoint coalitions, and verifies the returned coalition under the perturbation distribution.


\section{Experiment} \label{sec:experiment}
\begin{figure}[!t]
	\centering
	\includegraphics[width=1\linewidth]{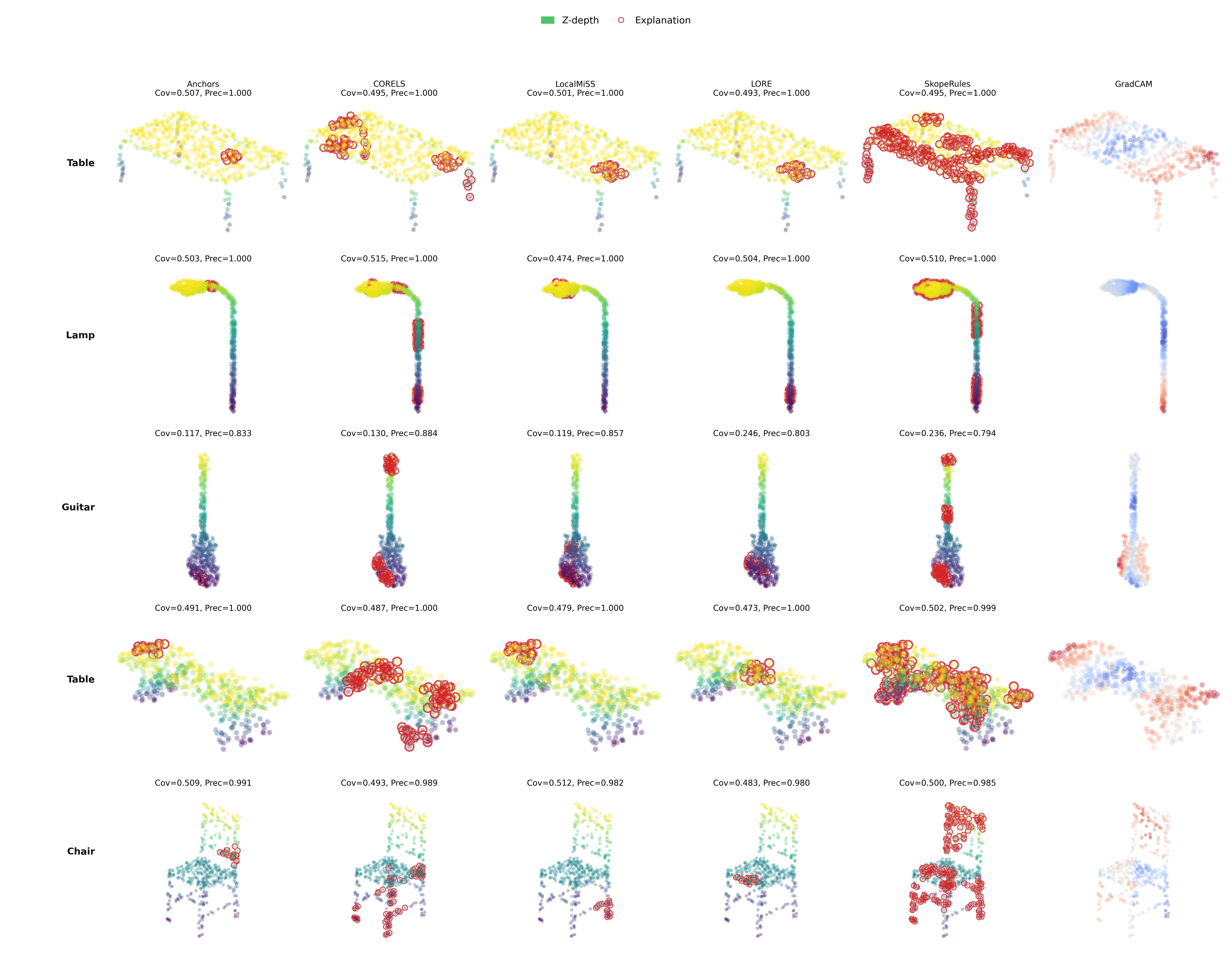}

		\caption{Comparison of superpoint-level explanations produced by different explainers. Columns correspond to Anchors, CORELS, MiSS, LORE, SkopeRules, and GradCAM. For all methods except GradCAM, the viridis colormap encodes point height along the $z$ axis and hollow red circles mark the explanation regions. GradCAM instead uses a signed coolwarm attribution map, where red indicates positive contribution and blue indicates negative contribution.}
	\label{fig:exp1}
\end{figure}

\subsection{Implementation Details}

\textbf{Datasets.} We evaluate on four datasets spanning 3D point cloud domains and tabular datasets. \emph{ModelNet40} is a large-scale 3D object classification benchmark containing 12,311 CAD models from 40 categories; we use the official split with 9,843 shapes for training and 2,468 for explaining the black-box model. \emph{ShapeNet} is a richly annotated repository of 3D shapes covering a wide range of object categories; we use the official split with 12,137 shapes for training and 2,874 for explaining. Explanations are drawn from the shuffled test list because the raw test files are category ordered; shuffling avoids category-biased prefixes when only a bounded subset is explained. \emph{Optdigits} (64 features) and \emph{Splice} (60 features) are standard tabular benchmarks used with a logistic-regression backbone.

\textbf{Black-Box Model.} For 3D point cloud experiments, we use two target classifiers $f$: PointNet and PointMLP. Both are trained for 200 epochs on ModelNet40 and ShapeNet. PointNet is trained with Adam using batch size 32 and learning rate $10^{-3}$. PointMLP is trained with SGD using batch size 32, initial learning rate $0.1$, momentum $0.9$, weight decay $2 \times 10^{-4}$, and cosine decay to a minimum learning rate of $5 \times 10^{-3}$. We also report results across these backbone choices in Table~\ref{tab:exp_pc}. These experiments evaluate the black-box sufficiency of the returned superpoint coalitions, but they do not establish alignment with architecture-specific receptive fields in PointNet++-style FPS-neighborhood backbones. For tabular experiments, we use a logistic-regression classifier.

\textbf{MaxSAT Solver.} We implement our MiSS search algorithm using PySAT~\cite{DBLP:conf/sat/IgnatievMM18}. Each search iteration rebuilds a weighted CNF and solves it with the RC2 MaxSAT optimizer~\cite{DBLP:journals/jsat/IgnatievMM19}, using MapleSat~\cite{DBLP:conf/sat/LiangGPC16} via PySAT as the underlying SAT backend.

\textbf{Superpoint Construction.} The current point-cloud implementation constructs superpoints with curvature-augmented K-Means rather than FPS sampling or ball-query neighborhoods. For each point, we first estimate a local curvature feature from its Euclidean $k$-nearest-neighbor neighborhood using PCA. We then cluster the augmented features $[x_i,y_i,z_i,\kappa_i]$ with K-Means and convert the resulting hard labels into superpoints $S_j=\{p_i:a_i=j\}$. Because each point receives exactly one K-Means label, the resulting superpoints are disjoint and their union covers the input point cloud; no additional de-duplication or overlap resolution is required. The full construction is given in Algorithm~\ref{alg:superpoint_segmentation}.

\textbf{Perturbation Operator.} In the point-cloud experiments, the perturbation operator $\mathcal{M}$ from Section~\ref{sec:formal_setting} is instantiated as an in-distribution patch-bank replacement. We build the bank from $M_{\mathrm{bank}}=32$ training point clouds by applying the same superpoint construction and storing centered, scale-normalized training patches. During oracle evaluation, superpoints in $C$ are preserved, while each inactive superpoint $S_i \notin C$ is replaced by a sampled bank patch with the same number of points, rescaled to the target region, translated to its centroid, and perturbed by a small random rotation and Gaussian translation. We use this replacement rather than zero padding or point deletion because the black-box classifiers expect fixed-size, plausible point clouds: deletion changes the input size, while zero padding creates artificial origin points and geometric holes. The perturbation therefore tests semantic absence through plausible local replacements rather than out-of-distribution artifacts.

\textbf{Hyperparameters.} We set the dataset-specific point-cloud parameters according to the validation sweep: ModelNet40 uses $K=16$, $L=6$, and perturbation strength $0.4$, while ShapeNet uses $K=40$, $L=5$, and perturbation strength $1.0$. Shared settings follow the current MiSS implementation: RC2 with MapleSat, patch-bank perturbation, the sequential sufficiency oracle, and surrogate-guided proposal. Search uses a cardinality window updated from oracle feedback and invokes the certified exact-size fallback when the heuristic window is exhausted. After each oracle query, the evaluated coalition is added to the history, the blocking clauses are updated, and the surrogate acquisition heuristic is re-fitted before rebuilding the next WCNF. Complete implementation hyperparameters are reported in Appendix~\ref{sec:appendix_hyperparams}.

\textbf{Baselines.} We evaluate our approach against four quantitative baselines: Anchors~\cite{anchor}, LORE~\cite{guidotti2018local}, SkopeRules~\cite{skoperules}, and CORELS~\cite{angelino2017learning}. GradCAM~\cite{gradcam} is included only for visual comparison. Each baseline is implemented following the specifications in its original publication to ensure a fair comparison. When a method is not natively defined for point-cloud inputs, we make the minimal adaptations required to run it on superpoint-based point-cloud representations.

\textbf{Artifact Availability.} The downloadable artifact package for this work is available at \url{https://owncloud.univ-artois.fr/index.php/s/1KfJ1EdzUerUVlS}. The access password is \texttt{lpar\_26\_miss}. We plan to release the source code as an open-source repository in future work.

\subsection{Evaluation Metrics}
We employ four complementary metrics that capture the faithfulness, empirical applicability, parsimony, and stability of the explanation method.

\textbf{Precision.}
Following the sufficiency criterion in Section~\ref{sec:method}, we define $\mathrm{Prec}(C) = \Pr[f(\tilde{P}_{C}) = y]$, the empirical estimate of the probability that the model's prediction is preserved under the perturbation distribution when only coalition $C$ is retained. A valid MiSS must satisfy $\mathrm{Prec}(C) \geq \tau$.
\begin{equation}
	\mathrm{Prec}(C) = \Pr\bigl[f(\tilde{P}_{C}) = y\bigr],
\end{equation}

\textbf{Coverage.}
Coverage measures the empirical applicability of a candidate explanation. Let $\{X_j\}_{j=1}^{N}$ denote evaluation coalitions sampled from the perturbation distribution, represented as subsets of the superpoint universe $\mathcal{S}$. For a candidate MiSS $C$, coverage is the fraction of evaluation coalitions that contain $C$:
\begin{equation}
    \label{eq:coverage}
    \mathrm{Cov}(C)
    =
    \frac{1}{N}\sum_{j=1}^{N}
    \mathbf{1}\!\left[C \subseteq X_j\right],
\end{equation}

\subsection{Evaluation on Point Cloud Benchmarks}

\textbf{Visual Comparison.} As shown in Figure~\ref{fig:exp1}, all explanations are computed at concept-level and rendered back to the signal level for visualization. For all methods except GradCAM, the viridis colormap represents point height along the $z$ axis, while red circles indicate the regions selected as explanations. GradCAM follows its standard signed attribution convention, with red denoting positive contribution and blue denoting negative contribution. For MiSS, the highlighted regions correspond to the superpoints belonging to the returned coalition $C^\star$.

\begin{table*}
\centering
\small
\setlength{\tabcolsep}{4pt}
\begin{tabular}{@{}llcccccc@{}}
\toprule
Dataset & Explainer & \multicolumn{3}{c}{PointMLP} & \multicolumn{3}{c}{PointNet} \\
 & & Prec.(\%)$\uparrow$ & Cov.(\%)$\uparrow$ & Runtime$\downarrow$ & Prec.(\%)$\uparrow$ & Cov.(\%)$\uparrow$ & Runtime$\downarrow$ \\
\midrule
ModelNet40 & Anchors & $58.62$ & $\mathbf{35.16}$ & $52.88$ & $83.32$ & $\mathbf{36.84}$ & $5.88$ \\
 & MiSS & $\mathbf{87.01}$ & $14.88$ & $39.00$ & $90.18$ & $32.12$ & $20.01$ \\
 & MiSS* & $86.93$ & $14.60$ & $22.74$ & $\mathbf{91.43}$ & $31.77$ & $10.14$ \\
 & LORE & $62.41$ & $19.01$ & $26.27$ & $83.08$ & $28.55$ & $3.08$ \\
 & SkopeRules & $65.82$ & $14.80$ & $30.41$ & $84.59$ & $25.01$ & $3.36$ \\
 & CORELS & $64.03$ & $14.99$ & $\mathbf{10.29}$ & $84.76$ & $21.53$ & $\mathbf{1.21}$ \\
\midrule
ShapeNet & Anchors & $93.82$ & $\mathbf{50.11}$ & $20.16$ & $95.83$ & $\mathbf{48.90}$ & $8.34$ \\
 & MiSS & $94.98$ & $45.15$ & $36.09$ & $97.19$ & $46.29$ & $20.90$ \\
 & MiSS* & $\mathbf{95.25}$ & $45.08$ & $\mathbf{4.08}$ & $\mathbf{97.38}$ & $46.62$ & $3.62$ \\
 & LORE & $94.78$ & $43.40$ & $38.11$ & $95.71$ & $44.51$ & $8.30$ \\
 & SkopeRules & $93.47$ & $42.39$ & $39.84$ & $95.98$ & $44.67$ & $8.79$ \\
 & CORELS & $94.17$ & $38.22$ & $11.80$ & $96.07$ & $40.95$ & $\mathbf{2.59}$ \\
\bottomrule
\end{tabular}
\caption{Results on the point cloud benchmarks. We report precision, coverage, and runtime on ModelNet40 and ShapeNet for the PointMLP and PointNet backbones. MiSS* denotes the first sufficient coalition.}
\label{tab:exp_pc}
\end{table*}

\textbf{Quantitative Comparison.} On ModelNet40, MiSS and MiSS* achieve substantially higher precision than all rule-based baselines on both backbones. Their coverage is lower than Anchors but remains competitive with the other baselines. MiSS* approximately halves the runtime of full MiSS while retaining similar precision and coverage.

\textbf{Results.} On ShapeNet, MiSS* achieves the highest precision on both PointMLP and PointNet. Anchors attains the highest coverage, while MiSS and MiSS* rank next and outperform the other rule-based baselines. The similarity between MiSS* and full MiSS indicates that the first sufficient coalition already provides strong explanation quality with substantially lower runtime.

\subsection{Evaluation on Tabular Benchmarks~\label{sec:tabular_benchmarks}}

To test MiSS outside geometric domains, we evaluate it on ten tabular benchmarks using logistic regression as the black-box classifier. We compare against four rule-based or anchor-style explainers: Anchors, LORE, SkopeRules, and CORELS. Table~\ref{tab:exp_tabular} highlights Bank and Breast, while the complete results are reported in Appendix~\ref{sec:appendix_tabular_full}.

\begin{table*}
	\centering
	\small
	\setlength{\tabcolsep}{4pt}
	\begin{tabular}{@{}llcccc@{}}
		\toprule
		Dataset & Explainer  & Dim. & Prec.(\%)$\uparrow$ & Cov.(\%)$\uparrow$ & Runtime (s)$\downarrow$ \\
		\midrule
		Bank    & Anchors    & 16   & $98.13$             & $50.53$            & $0.04$                  \\
		        & MiSS       &      & $\mathbf{99.83}$    & $\mathbf{50.97}$   & $0.62$                  \\
		        & MiSS*      &      & $99.59$             & $49.78$            & $0.61$                  \\
		        & LORE       &      & $98.44$             & $50.22$            & $0.05$                  \\
		        & SkopeRules &      & $98.44$             & $50.22$            & $0.05$                  \\
		        & CORELS     &      & $98.58$             & $49.97$            & $\mathbf{0.02}$         \\
		\midrule
		Breast  & Anchors    & 30   & $85.85$             & $51.06$            & $0.03$                  \\
		        & MiSS       &      & $79.78$             & $48.81$            & $0.03$                  \\
		        & MiSS*      &      & $79.22$             & $50.56$            & $\mathbf{0.01}$         \\
		        & LORE       &      & $80.55$             & $\mathbf{51.28}$   & $0.06$                  \\
		        & SkopeRules &      & $80.55$             & $\mathbf{51.28}$   & $0.06$                  \\
		        & CORELS     &      & $\mathbf{91.42}$    & $39.03$            & $0.02$                  \\
		\bottomrule
	\end{tabular}
	\caption{Main tabular experiment summary on Bank and Breast for the logistic-regression backbone. Dim. reports the number of features.}
	\label{tab:exp_tabular}
\end{table*}

\textbf{Results.} On Bank, MiSS achieves the highest precision (99.83\%) and coverage (50.97\%), but is slower than the rule-based baselines. On Breast, MiSS* is the fastest method and retains competitive coverage, although its precision trails CORELS. Across all ten datasets in Table~\ref{tab:exp_tabular_full}, MiSS is competitive on precision and coverage, while MiSS* is the fastest or tied-fastest method on seven datasets. Overall, the results show a dataset-dependent trade-off between explanation quality and runtime.

\subsection{Ablation Study}
\begin{figure*}[t]
	\centering
	\includegraphics[width=1\linewidth]{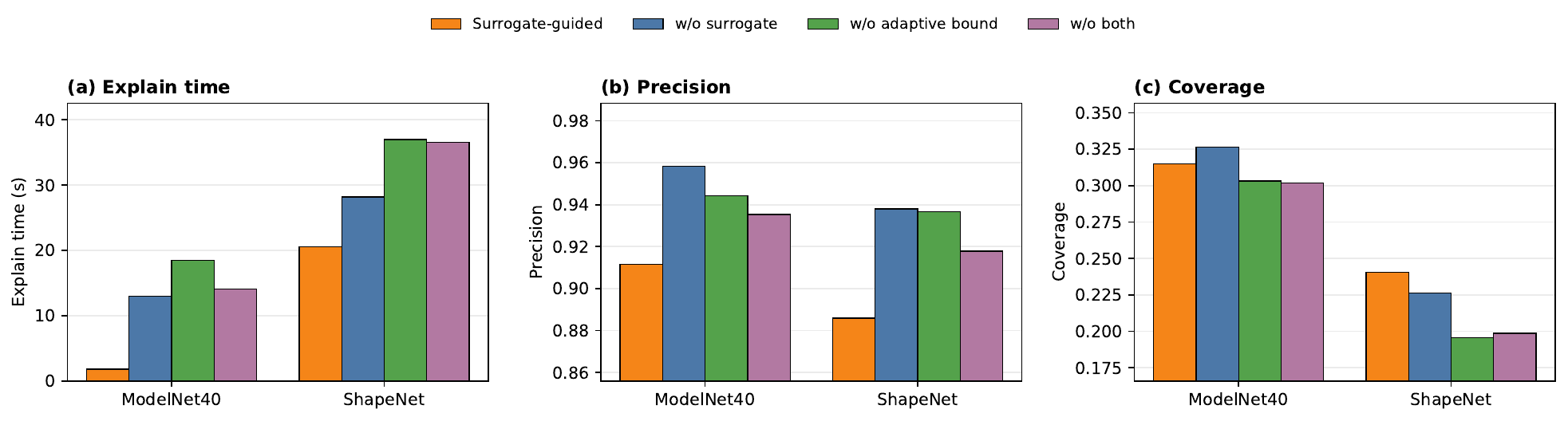}
	\caption{Comparison of the runtime, precision, and coverage of the first verified explanation returned by full MiSS and variants that remove surrogate guidance, dynamic cardinality bounds, or both.}
	\label{surrogate_ablation_summary}
\end{figure*}

\paragraph{Effect of Surrogate-Guided Proposal}
Figure~\ref{surrogate_ablation_summary} compares the default method with variants that remove surrogate guidance, dynamic cardinality bounds, or both. The default method has the lowest explanation runtime on both datasets while retaining comparable precision and coverage.

\paragraph{Search-Budget Sensitivity}
\begin{figure*}[tb]
	\centering
	\includegraphics[width=\linewidth]{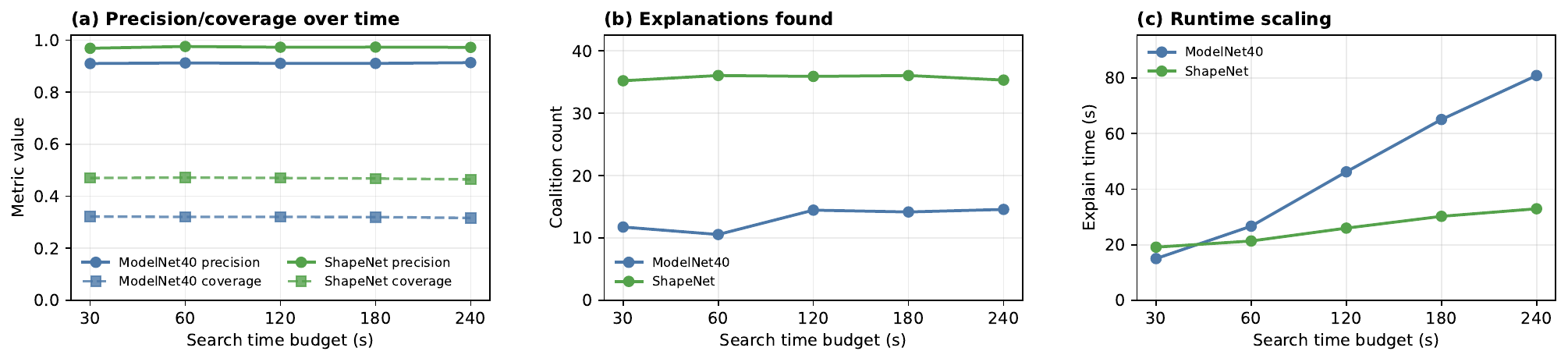}
	\caption{Search-budget sensitivity on ModelNet40 and ShapeNet. Panels (b)--(d) report precision and coverage, the number of coalitions found, and explanation runtime as the search-time budget increases.}
	\label{fig:search_budget_sensitivity}
\end{figure*}

Figure~\ref{fig:search_budget_sensitivity} reports results from sweeping $T_{\max}$ from $30$ to $240$ seconds. Precision, coverage, and coalition count remain nearly flat, whereas explanation runtime increases with the budget; longer search therefore provides little improvement in explanation quality or yield.

\paragraph{Solver Sensitivity}
\begin{table*}
	\centering
	\small
	\setlength{\tabcolsep}{4pt}
	\begin{tabular}{@{}lcccccccc@{}}
		\toprule
		Metric               & Glucose3 & Glucose4 & MapleSat         & MapleChrono & MapleCM & Minisat22 & MinisatGH \\
		\midrule
		Time (s) $\downarrow$ & $32.57$  & $34.36$  & $\mathbf{31.69}$ & $33.06$     & $34.25$ & $34.25$   & $35.88$   \\
		\bottomrule
	\end{tabular}
	\caption{Average MiSS explanation runtime under different PySAT solvers, aggregated over all backbone/dataset settings. Lower is better.}
	\label{tab:solver_ablation_runtime_compact}
\end{table*}

Table~\ref{tab:solver_ablation_runtime_compact} compares the average explanation runtime under different PySAT solvers.
MapleSat gives the lowest average runtime, followed by Glucose3 and MapleChrono, so we use MapleSat as the default solver.
The complete per-backbone and per-dataset solver results are reported in Appendix~\ref{sec:appendix_ablation}.

\paragraph{Hyperparameter Sensitivity}
\begin{figure*}[tb]
	\centering
	\includegraphics[width=1\linewidth]{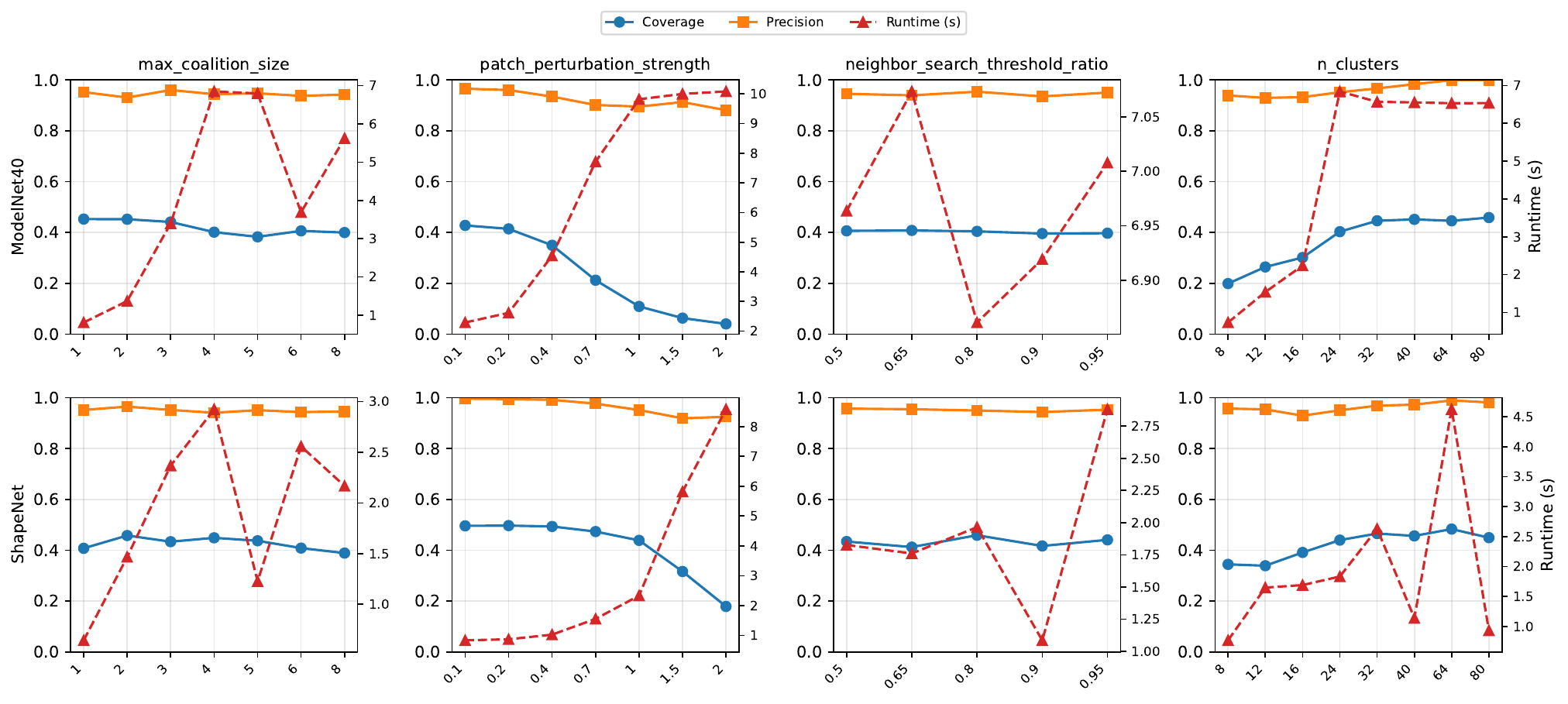}
	\caption{Hyperparameter sensitivity on ModelNet40 (top) and ShapeNet (bottom). Each column varies one hyperparameter while keeping the others fixed. Precision and coverage use the left $y$-axis, while runtime uses the right $y$-axis.}
	\label{fig:pc_ablation}
\end{figure*}

Figure~\ref{fig:pc_ablation} shows the point-cloud hyperparameter sweep over maximum coalition size, perturbation strength, neighbor-search threshold ratio, and superpoint cluster number. Accordingly, we set $L=6$, perturbation strength $0.4$, threshold ratio $0.8$, and $K=16$ for ModelNet40, and $L=5$, perturbation strength $1.0$, threshold ratio $0.9$, and $K=40$ for ShapeNet.

\section{Conclusion} \label{sec:conclusion}

We presented MiSS, a black-box framework that searches for perturbation-relative sufficient coalitions over a chosen superpoint partition. MiSS combines surrogate-guided weighted MaxSAT proposal with oracle verification, blocking clauses, a heuristic adaptive cardinality floor, certified exact-size fallback, and a safely tightened upper bound. Experiments on point-cloud and tabular benchmarks demonstrate its ability to produce compact, high-precision explanations without requiring a logical encoding of the classifier.

\textbf{Scope and limitations.}
The guarantees are relative to the partition, perturbation operator, threshold $\tau$, and certified-termination condition in Theorem~\ref{thm:minimality}. Although the heuristic floor may temporarily skip shorter coalitions, the exact-size fallback revisits skipped levels before certified termination. Minimum cardinality is therefore guaranteed when certification completes, but not when the time or query budget interrupts it. The explanations also depend on the semantic quality of the partition and perturbations, and should be interpreted as partition-level sufficiency evidence rather than causal evidence, internal model units, or a safety certificate.

Future work will study architecture-aware regions and the use of MiSS explanations for retraining, data curation, and downstream decision support.

\clearpage
\bibliographystyle{plain}
\bibliography{ref}

\clearpage
\appendix
\section{Full Tabular Benchmark Results}
\label{sec:appendix_tabular_full}

Table~\ref{tab:exp_tabular_full} reports the complete tabular benchmark results over all evaluated datasets, complementing the summarized results in the main text.

\begin{table}[H]
	\centering
	\small
	\setlength{\tabcolsep}{4pt}
	\begin{tabular}{@{}llcccc@{}}
		\toprule
		Dataset   & Explainer  & Dim. & Prec.(\%)$\uparrow$ & Cov.(\%)$\uparrow$ & Runtime (s)$\downarrow$ \\
		\midrule
		Bank      & Anchors    & 16   & $98.13$             & $50.53$            & $0.04$              \\
		          & MiSS       &      & $\mathbf{99.83}$    & $\mathbf{50.97}$   & $0.62$              \\
		          & MiSS*      &      & $99.59$             & $49.78$            & $0.61$              \\
		          & LORE       &      & $98.44$             & $50.22$            & $0.05$              \\
		          & SkopeRules &      & $98.44$             & $50.22$            & $0.05$              \\
		          & CORELS     &      & $98.58$             & $49.97$            & $\mathbf{0.02}$     \\
		\midrule
		Breast    & Anchors    & 30   & $85.85$             & $51.06$            & $0.03$              \\
		          & MiSS       &      & $79.78$             & $48.81$            & $0.03$              \\
		          & MiSS*      &      & $79.22$             & $50.56$            & $\mathbf{0.01}$     \\
		          & LORE       &      & $80.55$             & $\mathbf{51.28}$   & $0.06$              \\
		          & SkopeRules &      & $80.55$             & $\mathbf{51.28}$   & $0.06$              \\
		          & CORELS     &      & $\mathbf{91.42}$    & $39.03$            & $0.02$              \\
		\midrule
		Credit    & Anchors    & 23   & $99.66$             & $48.53$            & $0.03$              \\
		          & MiSS       &      & $99.49$             & $\mathbf{50.22}$   & $0.03$              \\
		          & MiSS*      &      & $99.65$             & $49.00$            & $\mathbf{0.00}$     \\
		          & LORE       &      & $99.43$             & $48.16$            & $0.05$              \\
		          & SkopeRules &      & $99.43$             & $48.16$            & $0.06$              \\
		          & CORELS     &      & $\mathbf{99.94}$    & $48.28$            & $0.02$              \\
		\midrule
		Heart     & Anchors    & 13   & $86.11$             & $49.49$            & $0.03$              \\
		          & MiSS       &      & $88.96$             & $49.95$            & $0.02$              \\
		          & MiSS*      &      & $87.26$             & $49.80$            & $\mathbf{0.01}$     \\
		          & LORE       &      & $86.75$             & $\mathbf{50.77}$   & $0.05$              \\
		          & SkopeRules &      & $86.75$             & $\mathbf{50.77}$   & $0.06$              \\
		          & CORELS     &      & $\mathbf{93.66}$    & $43.49$            & $0.03$              \\
		\midrule
		Mushroom  & Anchors    & 22   & $84.34$             & $\mathbf{50.88}$   & $0.03$              \\
		          & MiSS       &      & $81.24$             & $49.12$            & $0.03$              \\
		          & MiSS*      &      & $75.26$             & $\mathbf{50.88}$   & $\mathbf{0.01}$     \\
		          & LORE       &      & $80.48$             & $50.00$            & $0.05$              \\
		          & SkopeRules &      & $80.48$             & $50.00$            & $0.06$              \\
		          & CORELS     &      & $\mathbf{92.73}$    & $39.44$            & $0.02$              \\
		\bottomrule
	\end{tabular}
	\caption{Full tabular experiment results for the logistic-regression backbone. Dim. reports the number of features. MiSS* denotes the first sufficient coalition.}
	\label{tab:exp_tabular_full}
\end{table}

\begin{table}[H]
	\ContinuedFloat
	\centering
	\small
	\setlength{\tabcolsep}{4pt}
	\begin{tabular}{@{}llcccc@{}}
		\toprule
		Dataset   & Explainer  & Dim. & Prec.(\%)$\uparrow$ & Cov.(\%)$\uparrow$ & Runtime (s)$\downarrow$ \\
		\midrule
		Optdigits & Anchors    & 64   & $74.62$             & $\mathbf{46.47}$   & $2.23$              \\
		          & MiSS       &      & $81.53$             & $31.00$            & $34.92$             \\
		          & MiSS*      &      & $\mathbf{81.94}$    & $30.59$            & $11.99$             \\
		          & LORE       &      & $71.14$             & $\mathbf{46.47}$   & $0.08$              \\
		          & SkopeRules &      & $73.16$             & $44.47$            & $0.09$              \\
		          & CORELS     &      & $79.31$             & $27.56$            & $\mathbf{0.03}$     \\
		\midrule
		Splice    & Anchors    & 60   & $85.07$             & $49.44$            & $0.04$              \\
		          & MiSS       &      & $86.63$             & $48.62$            & $0.11$              \\
		          & MiSS*      &      & $87.95$             & $50.88$            & $\mathbf{0.02}$     \\
		          & LORE       &      & $87.27$             & $\mathbf{51.53}$   & $0.08$              \\
		          & SkopeRules &      & $87.27$             & $\mathbf{51.53}$   & $0.09$              \\
		          & CORELS     &      & $\mathbf{89.79}$    & $39.69$            & $0.02$              \\
		\midrule
		Vowel     & Anchors    & 12   & $68.43$             & $\mathbf{42.91}$   & $0.16$              \\
		          & MiSS       &      & $\mathbf{77.25}$    & $36.47$            & $0.68$              \\
		          & MiSS*      &      & $73.72$             & $35.59$            & $0.18$              \\
		          & LORE       &      & $65.88$             & $41.72$            & $0.05$              \\
		          & SkopeRules &      & $68.95$             & $39.72$            & $0.05$              \\
		          & CORELS     &      & $75.58$             & $31.06$            & $\mathbf{0.02}$     \\
		\midrule
		Wine      & Anchors    & 13   & $90.05$             & $47.14$            & $0.04$              \\
		          & MiSS       &      & $91.94$             & $50.43$            & $0.02$              \\
		          & MiSS*      &      & $88.24$             & $\mathbf{50.87}$   & $\mathbf{0.01}$     \\
		          & LORE       &      & $90.45$             & $48.70$            & $0.06$              \\
		          & SkopeRules &      & $90.45$             & $48.70$            & $0.06$              \\
		          & CORELS     &      & $\mathbf{98.77}$    & $48.61$            & $0.03$              \\
		\midrule
		Quality   & Anchors    & 11   & $84.52$             & $\mathbf{50.44}$   & $0.04$              \\
		          & MiSS       &      & $86.48$             & $47.47$            & $0.05$              \\
		          & MiSS*      &      & $83.97$             & $48.50$            & $\mathbf{0.02}$     \\
		          & LORE       &      & $85.02$             & $50.34$            & $0.04$              \\
		          & SkopeRules &      & $85.02$             & $50.34$            & $0.05$              \\
		          & CORELS     &      & $\mathbf{94.62}$    & $44.97$            & $0.02$              \\
		\bottomrule
	\end{tabular}
	\caption{Full tabular experiment results (continued).}
\end{table}

\section{Point-Cloud Hyperparameters}
\label{sec:appendix_hyperparams}

Table~\ref{tab:hyperparams} lists the point-cloud implementation hyperparameters used for ModelNet40 and ShapeNet.

\begin{table*}[t]
	\centering
	\small
	\setlength{\tabcolsep}{6pt}
	\begin{tabular}{@{}llcc@{}}
		\toprule
		Component   & Hyperparameter                         & ModelNet40         & ShapeNet          \\
		\midrule
		\multirow{6}{*}{Superpoints} & Number of segments $K$                    & $16$              & $40$              \\
		                             & Curvature neighbors                       & \multicolumn{2}{c}{$20$}              \\
		                             & Clustering method                         & \multicolumn{2}{c}{K-Means on $[x,y,z,\kappa]$} \\
		                             & K-Means restarts                          & \multicolumn{2}{c}{$10$}              \\
		                             & K-Means max iterations                    & \multicolumn{2}{c}{$300$}             \\
		                             & K-Means tolerance                         & \multicolumn{2}{c}{$10^{-4}$}         \\
		\multirow{4}{*}{Masker}      & Inactive replacement mode                 & \multicolumn{2}{c}{\texttt{patch\_bank}} \\
		                             & Mask activation probability               & \multicolumn{2}{c}{$0.5$}             \\
		                             & Patch-bank train samples                  & \multicolumn{2}{c}{$32$}              \\
		                             & Patch perturbation strength               & $0.4$             & $1.0$             \\
		\multirow{7}{*}{Search}      & Max coalition size $L$                    & $6$               & $5$               \\
		                             & Successful-coalition cap                  & \multicolumn{2}{c}{$500$}             \\
		                             & Time limit $T_{\max}$                     & \multicolumn{2}{c}{$60$s}             \\
		                             & MaxSAT optimizer                          & \multicolumn{2}{c}{RC2}               \\
		                             & SAT backend                               & \multicolumn{2}{c}{MapleSat}          \\
		                             & Heuristic adaptive cardinality floor      & \multicolumn{2}{c}{enabled}           \\
		                             & Certified exact-size fallback             & \multicolumn{2}{c}{enabled}           \\
		                             & Floor patience $\eta$                     & \multicolumn{2}{c}{$8$}               \\
		\multirow{5}{*}{Oracle}      & Precision threshold $\tau$                & \multicolumn{2}{c}{$0.85$}            \\
		                             & Confidence level $\delta_{\mathrm{test}}$ & \multicolumn{2}{c}{$0.05$}            \\
		                             & Oracle batch size $B$                     & \multicolumn{2}{c}{$100$}             \\
		                             & Max oracle batches                        & \multicolumn{2}{c}{$5$}               \\
		                             & Max oracle samples $N_{\max}$             & \multicolumn{2}{c}{$500$}             \\
		\multirow{3}{*}{Surrogate}   & Surrogate-guided proposal                 & \multicolumn{2}{c}{enabled}           \\
		                             & Ridge penalty                             & \multicolumn{2}{c}{$1.0$}             \\
		                             & Soft-weight scale                         & \multicolumn{2}{c}{$100$}             \\
		\multirow{2}{*}{Selection}   & Primary coverage tolerance                & \multicolumn{2}{c}{$0.02$}            \\
		                             & Primary precision margin                  & \multicolumn{2}{c}{$0.05$}            \\
		\multirow{1}{*}{General}     & Random seed                               & \multicolumn{2}{c}{$43$}              \\
		\bottomrule
	\end{tabular}
	\caption{Main hyperparameters used by the point-cloud MiSS explainer. Dataset-specific settings match the current code path; shared settings are identical for ModelNet40 and ShapeNet. The oracle sample cap satisfies $N_{\max}=B$ times the maximum number of oracle batches. Training details of the black-box models are reported separately in the text.}
	\label{tab:hyperparams}
\end{table*}

\section{Algorithm Listings}
\label{sec:appendix_algorithms}

Algorithm~\ref{alg:superpoint_segmentation} gives the curvature-augmented K-Means procedure used to construct the superpoint partition in the point-cloud experiments.

\begin{algorithm}[!t]
	\caption{Curvature-Augmented Superpoint Segmentation}
	\label{alg:superpoint_segmentation}
	\begin{algorithmic}[1]
		\Require Point cloud $P=\{p_i\}_{i=1}^{N}$, number of superpoints $K$, curvature neighborhood size $k$, random seed $s$
		\Ensure Superpoint partition $\mathcal{S}=\{S_j\}_{j=1}^{K}$

		\State Fit a Euclidean $k$-nearest-neighbor index on $P$
		\For{$i=1$ \To $N$}
			\State Let $\mathcal{N}_k(i)$ be the $k$ nearest neighbors of $p_i$
			\State Compute the covariance matrix of the points in $\mathcal{N}_k(i)$
			\State Let $\lambda_0 \leq \lambda_1 \leq \lambda_2$ be the covariance eigenvalues
			\State Set $\kappa_i \leftarrow \lambda_0 / (\lambda_0+\lambda_1+\lambda_2+\epsilon)$
			\State Form the augmented feature $u_i \leftarrow [x_i,y_i,z_i,\kappa_i]$
		\EndFor
		\State Run K-Means on $\{u_i\}_{i=1}^{N}$ with $K$ clusters, seed $s$, $10$ restarts, $300$ maximum iterations, and tolerance $10^{-4}$
		\State Let $a_i \in \{1,\ldots,K\}$ be the hard K-Means label of point $p_i$
		\For{$j=1$ \To $K$}
			\State Set $S_j \leftarrow \{p_i \in P: a_i=j\}$
		\EndFor
		\State \Return $\mathcal{S}=\{S_1,\ldots,S_K\}$
	\end{algorithmic}
\end{algorithm}

\section{Ablation Study}
\label{sec:appendix_ablation}

\textbf{Solver Runtime.} Table~\ref{tab:solver_ablation_runtime} reports the complete solver-runtime ablation across all backbone and dataset settings, complementing the average-runtime summary in Table~\ref{tab:solver_ablation_runtime_compact}.

\begin{table*}
	\centering
	\small
	\setlength{\tabcolsep}{4pt}
	\begin{tabular}{@{}lccccc@{}}
		\toprule
		Solver      & PointMLP@MN40    & PointMLP@SN      & PointNet@MN40    & PointNet@SN      & Avg.             \\
		\midrule
		Glucose3    & $54.10$          & $34.36$          & $23.38$          & $18.43$          & $32.57$          \\
		Glucose4    & $55.43$          & $34.47$          & $29.08$          & $18.45$          & $34.36$          \\
		Glucose42   & $55.24$          & $\mathbf{34.07}$ & $29.54$          & $18.27$          & $34.28$          \\
		MapleSat    & $\mathbf{50.88}$ & $34.72$          & $\mathbf{22.91}$ & $18.24$          & $\mathbf{31.69}$ \\
		MapleChrono & $50.89$          & $34.10$          & $28.90$          & $18.35$          & $33.06$          \\
		MapleCM     & $54.76$          & $35.13$          & $28.86$          & $\mathbf{18.23}$ & $34.25$          \\
		Minisat22   & $54.59$          & $34.68$          & $29.48$          & $18.26$          & $34.25$          \\
		MinisatGH   & $54.79$          & $34.09$          & $36.35$          & $18.31$          & $35.88$          \\
		\bottomrule
	\end{tabular}
	\caption{MiSS explanation runtime under different PySAT solvers. Values are mean explanation time in seconds; lower is better. MN40 denotes ModelNet40, SN denotes ShapeNet.}
	\label{tab:solver_ablation_runtime}
\end{table*}

\section{Manifold-Preserving Perturbation}\label{sec:manifold_perturbation}
To verify the sufficiency condition defined above, we must simulate the removal of features. Zero-padding can create ``holes'' in the 3D structure, pushing the data out-of-distribution.
To ensure the perturbed point cloud $\tilde{P}_{C}$ remains on the data manifold, we utilize a \textit{neighborhood-based sampling strategy}.
For each masked superpoint $S_i \notin C$, we replace its points with samples drawn from the local geometric context:
\begin{equation}
	\tilde{S}_i = \{ p \mid p \sim \mathcal{U}(\bigcup_{j \in \mathcal{N}(i)} S_j) \}
\end{equation}
where $\mathcal{N}(i)$ denotes the set of $k$-nearest unmasked superpoints to the centroid of $S_i$. This preserves the local geometric density, ensuring that the classifier evaluates the \textit{semantic} absence of the feature rather than an artifact of the deletion process.

\end{document}